\newtheorem{proposition}{Proposition}[section]
\title{\LARGE \bf TDR-OBCA: A Reliable Planner for Autonomous Driving \\
in Free-Space Environment}
\begin{document}

\author{%
    \parbox{\linewidth}{\centering
      Runxin He$^{1}$,
      Jinyun Zhou$^{1}$,
      Shu Jiang,
      Yu Wang,
      Jiaming Tao,
      Shiyu Song,
      Jiangtao Hu,
      Jinghao Miao,
      Qi Luo$^{2}$
  }%
  \thanks{$^1$ Authors contributed equally to this paper}
  \thanks{$^2$ Corresponding author}
  \thanks{All Authors are with Baidu USA LLC,
        250 E Caribbean Drive, Sunnyvale, CA 94089
        {\tt\small luoqi06@baidu.com}}
}

\markboth{}{}

\maketitle
\thispagestyle{empty}
\pagestyle{empty}

\begin{abstract}

This paper presents an optimization-based collision avoidance trajectory generation method for autonomous driving in free-space environments, with enhanced robustness, driving comfort and efficiency. Starting from the hybrid optimization-based framework, we introduce two warm start methods, temporal and dual variable warm starts, to improve the efficiency. We also reformulates the problem to improve the robustness and efficiency. We name this new algorithm TDR-OBCA.
With these changes, compared with original hybrid optimization we achieve a $96.67\%$ failure rate decrease with respect to initial conditions, 
$13.53\%$ increase in driving comforts and $3.33\%$ to $44.82\%$ increase in planner efficiency as obstacles number scales.
We validate our results in hundreds of simulation scenarios and hundreds of hours of public road tests in both U.S. and China.
Our source code is available at~\url{https://github.com/ApolloAuto/apollo}.
\end{abstract}
\IEEEpeerreviewmaketitle

\section{Introduction}
%
%
%
%
Collision-free trajectory planning for nonholonomic system is vital to robotic applications, such as autonomous driving~\cite{paden2016survey}~\cite{schwarting2018planning}.
One of its application areas is the autonomous driving in free-space scenarios~\cite{Qi9300224}, where the ego vehicle's starting points or destinations are off-road.
Parking and hailing  belong to these free-space scenarios, because the ego vehicle either goes to or starts from an off-road spot, e.g., parking spot.
In free-space scenarios, an autonomous vehicle may need to drive forward and/or backward via gear shifting to maneuver itself towards its destination.
Trajectories generated by Hybrid A*~\cite{dolgov2008practical} or other sampling/searching based methods~\cite{aoude2010sampling} usually fail to meet the smoothness requirement for autonomous driving.
Thus, an optimization-based method is required to generate feasible trajectories in order to decrease autonomous driving failures~\cite{mayne2000constrained}~\cite{richards2005implementation}.

Though non-holonomic constraints, such as vehicle dynamics, can be well incorporated into Model Predictive Control (MPC) formulations, collision avoidance constraint still remains a challenge due to its non-convexity and non-differentiability~\cite{eele2009path}~\cite{runxin17}.
To simplify the collision constraint, most trajectory planning studies approximate the ego vehicle and obstacles into disks. However, this approximation reduces the problem's geometrical feasible space, which may lead to failures in scenarios with irregularly placed obstacles~\cite{li2015unified}.
If obstacles and the ego vehicle are represented with full dimensional polygons, the collision avoidance MPC is derived into a Mixed-Integer Programming (MIP) problem~\cite{da2019collision}.
Though a variety of numerical algorithms are available to solve MIP such as Dynamical Programming (DP) method~\cite{kellerer2004multidimensional} and the branch and bound method~\cite{floudas1995nonlinear}, due to its completeness, the computation time may fail to satisfy the real-time requirements when the number of nearby obstacles is large~\cite{richards2005mixed}.
To remove the discrete integer variables in the MIP, Zhang~\cite{zhang2018autonomous} presented Hybrid Optimization-based Collision Avoidance (H-OBCA) algorithm.
The distances between the ego vehicle and obstacles are reformulated into their dual expressions, and the trajectory planning problem is transformed into a large-scale MPC problem with only continuous variables.

Inspired by H-OBCA, we present Temporal and Dual warm starts with Reformulated Optimization-based Collision Avoidance (TDR-OBCA) algorithm with improved robustness, driving comfort and efficiency, and integrate it with Apollo Autonomous Driving Platform~\cite{apollo_2019}~\cite{IV9304810}.
Our contributions are:
\begin{enumerate}
    \item \textbf{Robustness}: With two extra warm starts and a reformulation to the cost and constraints in the final MPC, we reduce failure rate by $96.67 \%$, from $37.5 \%$ to $1.25 \%$ with respect to different initial spots in Section~\ref{sec:simulation_simple};
    \item \textbf{Driving Comfort}: With additional smooth penalty terms in the cost function, we show in Section~\ref{sec:simulation_apollo} the steering control outputs have reduced more than $13.53\%$. 
    \item \textbf{Efficiency}: We also show in Section~\ref{sec:simulation_apollo} that TDR-OBCA's solving  time maintains a $3.44\%$ to $81.25\%$ decrease compared with original methods as the number of obstacles increases in different scenarios (including the driving region boundaries, as well as surrounding vehicles and pedestrians), with possible future time reduction if we replace IPOPT~\cite{wachter2006on} with powerful solvers specially designed for MPC, such as GRAMPC~\cite{GRAMPC}. Thus making it applicable to free-space scenarios of different complexities, e.g. parking, pulling over, hailing and even three-point U-turning in the future. 
    \item \textbf{Real Road Tests}: We show in~\ref{sec:road_experiment} that robustness, efficiency and driving comfort of TDR-OBCA is verified on both 282 simulation cases and hundreds of hours of public road tests in both China and USA.
    
\end{enumerate} 
This paper is organized as follows: the problem statement and core algorithms are in Section~\ref{sec:optimization} and~\ref{sec:extension} respectively; the results of both simulations and on-road tests are presented in Section~\ref{sec:experiments}.

\section{Problems statement}
\label{sec:optimization}
TRD-OBCA aims to generate a smooth collision-free trajectory for autonomous vehicles in free spaces, i.e., parking lots or off-road regions.
We formulate the collision-free and smoothness requirements as constrains of a MPC optimization problem, which is similar to H-OBCA algorithm~\cite{zhang2018autonomous}. 

At time $k$, the autonomous ego vehicle's state vector is $x\pb{k} = \sqparen{x_{x}\pb{k}, x_{y}\pb{k}, x_v\pb{k}, x_{\phi}\pb{k}}^T\in \R^4$,
where $x_x\pb{k}$ and $x_y\pb{k}$ is vehicle's latitude and longitude position, $x_v\pb{k}$ is the vehicle's velocity and $x_{\phi}\pb{k}$ is heading (yaw) angle in radius. The control command from steering and brake/throttle is formulated as $u\pb{k} = \sqparen{\delta\pb{k}, a\pb{k} }^T \in \R^2$, where $\delta\pb{k}$ is the steering and $a\pb{k}$ is the acceleration. The ego vehicle's dynamic system is modeled by kinematic bicycle model, defined as Eq.\ref{eq:dynamic_model},
\begin{equation}
\label{eq:dynamic_model}
    x\pb{k} = f\pb{x\pb{k-1}, u\pb{k-1}},
\end{equation}
and detailed as Eq.\ref{eq:detailed_dynamic_model}, 
\begin{equation}
\label{eq:detailed_dynamic_model}
\begin{aligned}
    & x_x\pb{k+1} = x_v\pb{k}\cos{(x_{\phi}\pb{k})}d_t + x_x\pb{k}, \\
    & x_y\pb{k+1} = x_v\pb{k}\sin{(x_{\phi}\pb{k})}d_t + x_y\pb{k}, \\
    & x_{\phi}\pb{k+1} = x_v\pb{k}\frac{\tan{(\delta\pb{k}})}{L}d_t +  x_{\phi}\pb{k}, \\
    & x_v\pb{k+1} = a\pb{k}d_t + x_v\pb{k}, \\
\end{aligned}
\end{equation}
where $L$ is the wheelbase length and $d_t$ is the discretization time step,

Compared to H-OBCA, we have two major changes to the problem's formulation. The first is, instead of using variant time steps, the time horizon~$\sqparen{0,T}$ is evenly discretized into $K$ steps. The time horizon $T$ is derived based on the estimated trajectory distance and vehicle dynamic feasibility. The detailed implementation is introduced in Section~\ref{subsec:temporal_warm_start}. 
The other major change is that we reformulate the constraints and cost function to reduce control efforts and increase trajectory smoothness, which is introduced in Section~\ref{subsec:relaxation}.
Here we first present the formulation to H-OBCA, which transforms the collision avoidance problem form a MIP to a continous MPC with nonlinear constrains,
\begin{equation}
  \label{eq:original}
  \begin{aligned}
    &\min_{\substack{%
        \bm{x, u, \mu, \lambda} \\
      }}\,
    \sum_{k=1}^{K} l~\pb{x\pb{k}, u\pb{k-1}} \\
    &\text{subject to:} \\
    & x\pb{0} = x_0, \\
    & x\pb{K} = x_{F}, \\
    & x\pb{k} = f\pb{x\pb{k-1}, u\pb{k-1}}, \\
    & h\pb{x\pb{k}, u\pb{k}} \leq 0, \\
    &-g^T \mu_m{\pb{k}}
        + \pb{A_{m} t\pb{x\pb{k}} - b_{m}}^T \lambda_m{\pb{k}} > d_{min}, \\
    & G^T \mu_m{\pb{k}}
        + R\pb{x\pb{k}}^T {A_{m}}^T \lambda_m{\pb{k}}
        = 0, \\
    &\norm{{A_{{m}}}^T \lambda_m{\pb{k}}}_{2} \leq 1, 
    \lambda_{m} \pb{k} \succeq_{\kappa} 0,
     \mu_m {\pb{k}} \succeq_{\kappa} 0, \\
    &\text{for } k = 1, \dots K,
        \text{ } m = 1, \dots, M.
  \end{aligned}
\end{equation}
Where $l$ is the cost function, its detailed formulation and our corresponding reformulation are in Section~\ref{subsec:relaxation}.
The optimization is performed over $\bm{x} =\sqparen{x\pb{0},\ldots,x\pb{K}}$, $\bm{u} =\sqparen{u\pb{0},\ldots,u\pb{K-1}}$ and dual variables to the dual formulas of distances between ego vehicle and obstacles~\cite{zhang2018autonomous}, $\bm{\mu} =\sqparen{\mu_{1}\pb{1},\ldots,\mu_{1}\pb{K},\mu_{2}\pb{1}\ldots,\mu_{M}\pb{K}}$ 
and $\bm{\lambda} =\sqparen{\lambda_{1}\pb{1},\ldots,\lambda_{1}\pb{K},\lambda_{2}\pb{1}\ldots,\lambda_{M}\pb{K}}$.
$x_0$ and $x_F$ are ego vehicle's start and end state respectively.
Constraints $h\pb{x\pb{k}, u\pb{k}}$ express the categories of vehicle dynamic limitations and driving comforts. In this paper, we apply box constraints to describe a subset of the union of dynamic limits and driving comforts w.r.t. steering angle and acceleration.
The vehicle geometry is approximated as a polygon with line segments, described by matrix $G$ and vector $g$.
Obstacles are described as combinations of polygons as well, the $m$-th polygon is represented as a matrix $A_m$ and the vector $b_m$.
A complicated obstacle may be expressed by several polygons with more line segments than a simple one.
At time step $k$, the set distance between the ego vehicle and $m$-th obstacle polygon should be larger than $d_{min}$ for safety.
Detailed descriptions to expressions of obstacles' geometry and safety distance are referred to H-OBCA~\cite{zhang2018autonomous}.
In production, in order to keep autonomous vehicle's safety, when the H-OBCA problem or our TDR-OBCA problem are failed due to unexpected scenarios, for example obstacles' aggressive maneuvers, the redundant safety module of the driving platform will generate a fallback trajectory instead.
The details of the fallback trajectory is referred to the platform's source code and not discussed in this paper.

\section{Core algorithms}
\label{sec:extension}
Due to its high non-convexity and non-linearity, solving problem in Eq.~\eqref{eq:original} requires heavy computation efforts. Even in simulation, it may take several minutes to generate a trajectory. We introduce the temporal profile and dual variable warm starts together with hybrid A* to speed up the convergence of the final optimization problem. We also reformulate the MPC problem with new penalty terms to guarantee the smoothness of the trajectory and soft constraints to end state spot for robustness.
The architecture of TDR-OBCA is shown in Fig.~\ref{figure:open_space_architect_4} and the highlighted blocks are the core algorithms, which are introduced in this section. 

\begin{figure}[!htb]
\centering
\includegraphics[width=0.7\linewidth]{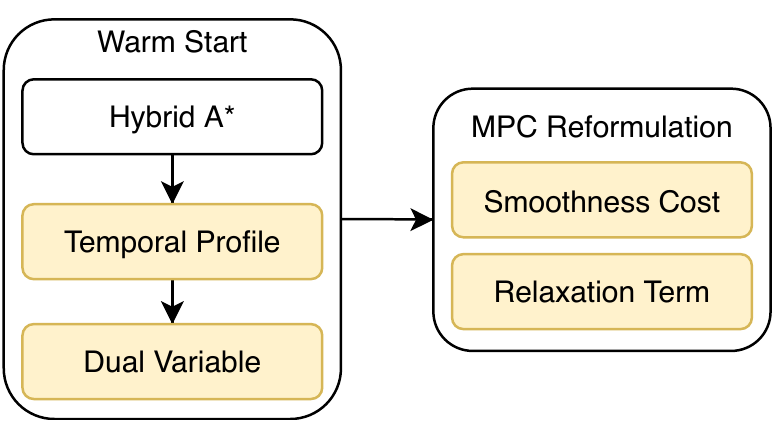}
\caption{TDR-OBCA Architecture.}
\label{figure:open_space_architect_4}
\end{figure}

\subsection{Temporal profile warm start}
\label{subsec:temporal_warm_start}
Hybrid A* is used to heuristically generate $x_{x}$, $x_{y}$ and $x_{phi}$ as part of a collision free trajectory to problem~\eqref{eq:original}.
Then, a temporal profile method estimates $x_v$ and $x_{\phi}$ based on the path result from Hybrid A*.
However, the temporal profile results generated by directly differentiating path points from Hybrid A* is not smooth enough for control layer.
In TDR-OBCA, an optimization-based temporal profile method is introduced to improve the dynamic feasibility and smoothness of the trajectory, thus reducing the failure rate.

The detail of the temporal profile method in TDR-OBCA has been discussed in another work of one of the authors~\cite{yajia}.
In this process, the Hybrid A* output path is first partitioned by different vehicle gear locations, i.e., forward gear or reverse gear. Then, on each path, we optimize the state space of longitudinal traverse distance and its derivatives with respect to time.
In our application, the total time horizon is derived based on the ego vehicle's minimal traverse time and its dynamic feasibility.
Given the ego vehicle's maximum acceleration $a_{max}$, maximum speed $v_{max}$, and maximum deceleration $-a_{max}$ on a traverse distance of $s$, total horizon is chosen as $T = r \frac{v_{max}^2 + s a_{max}}{a_{max}  v_{max}}$ where $r \in [1.2, 1.5]$ is a heuristic ratio.
 
The modified optimization problem for temporal profile is a quadratic programming (QP) problem, which can be efficiently solved by high performance numerical solvers~\cite{boyd2011distributed}.

\subsection{Dual variable warm start}
\label{subsec:dual_variable}
In TDR-OBCA, we use dual variable initialization to provide better initial values to $\bm{\mu}$ and $\bm{\lambda}$ for the final MPC problem, such as~\eqref{eq:original}, to achieve a fast convergence to the optimal value.
Because the cost function in problem~\eqref{eq:original} does not contain $\bm{\mu}$ and $\bm{\lambda}$, its direct optimization-based warm start problem is not well defined.
To address this issue, we first introduce a slack variable, $\bm{d}$, to the dual variable warm start problem. We define $\bm{d} =\sqparen{d_{1}\pb{1},\ldots,d_{1}\pb{K},d_{2}\pb{1}\ldots,d_{M}\pb{K}} $, where $d_m\pb{k}$ represents the negative value of a safety distance between the ego vehicle and the $m$-th obstacle polygon at time $k$. Thus, the dual warm up problem is reformulated as
\begin{equation}
  \label{eq:original_dual}
  \begin{aligned}
    &\min_{\bm{\mu}, \bm{\lambda}, \bm{d}}\, \sum_{m=1}^{M} \sum_{k=1}^{K} d_m\pb{k} \\
    &\text{subject to:} \\
    &
    -g^T \mu_m{\pb{k}}
    + \pb{A_{m} t\pb{\Tilde{x}^*\pb{k}} - b_{m}}^T \lambda_m{\pb{k}}\\
    & \hspace{4.0em}
    + d_m\pb{k} = 0, \\
    &
    G^T \mu_m{\pb{k}}
        + R\pb{\Tilde{x}^*\pb{k}}^T {A_{m}}^T \lambda_m{\pb{k}}
        = 0, \\
    &
    \norm{{A_{{m}}}^T \lambda_m{\pb{k}}}_{2} \leq 1, \\
    &
    \lambda_{m} \pb{k} \succeq 0,
    \mu_m {\pb{k}} \succeq 0,
    d_m\pb{k} < 0, \\
    &\hspace{4.0em}
    \text{for } k = 1, \dots K,
        \text{ } m = 1, \dots, M,
  \end{aligned}
\end{equation}
where $\Tilde{x}^*$ is the vehicle states from Hybrid A* .
To improve the pipeline's computation efficiency, we transform the problem~\eqref{eq:original_dual} from a convex Quadratic Constrained Quadratic Programming (QCQP) into a simple QP problem as Eq.~\eqref{eq:relax_dual}.
The transformed QP problem~\eqref{eq:relax_dual} can be easily solved by distributed and parallel computing algorithms, such as Operator Splitting and ADMM~\cite{boyd2011distributed}.
\begin{equation}
  \label{eq:relax_dual}
  \begin{aligned}
    & \min_{\bm{\mu}, \bm{\lambda}, \bm{d}}\,  
      \frac{1}{\beta} \sum_{m=1}^M \norm{{A_{{m}}}^T \lambda_m{\pb{k}}}_{2}^2
      + \sum_{m=1}^{M} \sum_{k=1}^{K} d_m\pb{k}\\
    & \text{subject to:} \\
    & 
    -g^T \mu_m{\pb{k}}
    + \pb{A_{m} t\pb{\Tilde{x}^*\pb{k}} - b_{m}}^T \lambda_m{\pb{k}} \\
    & \hspace{4.0em}
    + d_m\pb{k} = 0, \\
    & 
    G^T \mu_m{\pb{k}}
        + R\pb{\Tilde{x}^*\pb{k}}^T {A_{m}}^T \lambda_m{\pb{k}}
        = 0, \\
    & 
    \lambda_{m} \pb{k} \succeq 0,
    \mu_m {\pb{k}} \succeq 0,
    d_m\pb{k} < 0, \\
    & \hspace{4.0em}
    \text{for } k = 1, \dots K,
        \text{ } m = 1, \dots, M.
  \end{aligned}
\end{equation}

Now we prove that the optimal solution to the relaxed problem~\eqref{eq:relax_dual} leads to an sub-optimal point to problem~\eqref{eq:original_dual} and show one corresponding transformation formula.

\begin{proposition}
\label{thm:dual1}
The feasible set of optimization problem~\eqref{eq:original_dual} is a subset of~\eqref{eq:relax_dual}.
Thus, given
 $\{\tilde{\bm{\mu}}, \tilde{\bm{\lambda}}, \tilde{\bm{d}}\}$ as the optimum solution to~\eqref{eq:original_dual},
$\{\bm{\mu}^{o}, \bm{\lambda}^{o}, \bm{d}^{o}\}$ as the optimum solution to~\eqref{eq:relax_dual}, then
\begin{equation}
  \label{eq:dual_bnd}
  \begin{aligned}
    & \frac{1}{\beta} \sum_{m=1}^M \norm{{A_{{m}}}^T \lambda^o_m{\pb{k}}}_{2}^2
    + \sum_{m=1}^{M} \sum_{k=1}^{K} d^o_m\pb{k} \leq \\
    & \frac{1}{\beta} \sum_{m=1}^M \norm{{A_{{m}}}^T \tilde{\lambda}_m{\pb{k}}}_{2}^2
    + \sum_{m=1}^{M} \sum_{k=1}^{K} \tilde{d}_m\pb{k}.
  \end{aligned}
\end{equation}
Moreover, the optimization solution to problem~\eqref{eq:relax_dual} leads to a feasible point to optimization problem~\eqref{eq:original_dual}.
\end{proposition}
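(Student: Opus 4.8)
The plan is to treat the three assertions in the order they are stated, noting that the first two are almost immediate once the right observation is isolated, while the third — producing an honest feasible point of~\eqref{eq:original_dual} from a solution of~\eqref{eq:relax_dual} — carries the real content and will need an explicit construction.

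First I would establish the inclusion of feasible sets. The only difference between the constraint systems of~\eqref{eq:original_dual} and~\eqref{eq:relax_dual} is that the latter omits the second-order cone bound $\norm{{A_{m}}^T \lambda_m\pb{k}}_{2} \leq 1$; every equality constraint and every sign condition is shared verbatim. Hence any triple $\{\bm{\mu}, \bm{\lambda}, \bm{d}\}$ feasible for~\eqref{eq:original_dual} satisfies a superset of the constraints of~\eqref{eq:relax_dual}, and is therefore feasible for~\eqref{eq:relax_dual}, which gives the claimed inclusion.

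Next, the bound~\eqref{eq:dual_bnd} would follow purely from optimality, with no convexity or duality argument beyond the definition of a minimizer. Since $\{\tilde{\bm{\mu}}, \tilde{\bm{\lambda}}, \tilde{\bm{d}}\}$ lies in the feasible set of~\eqref{eq:original_dual}, by the inclusion just shown it is also feasible for~\eqref{eq:relax_dual}. The point $\{\bm{\mu}^{o}, \bm{\lambda}^{o}, \bm{d}^{o}\}$ minimizes the objective of~\eqref{eq:relax_dual} over that feasible set, so evaluating the same objective at the feasible candidate $\{\tilde{\bm{\mu}}, \tilde{\bm{\lambda}}, \tilde{\bm{d}}\}$ can only yield a value that is at least as large, which is exactly~\eqref{eq:dual_bnd}.

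The substantive step is the third, and it is where I expect the main difficulty: I would exhibit an explicit scaling that repairs the dropped cone constraint. The key structural fact is that, for each fixed pair $(m,k)$, every constraint of~\eqref{eq:original_dual} other than the norm bound is positively homogeneous in the block $(\mu_m\pb{k}, \lambda_m\pb{k}, d_m\pb{k})$: the two equality constraints are linear and homogeneous, so scaling the block by a positive scalar keeps their left-hand sides equal to zero, and the sign conditions $\lambda_m\pb{k} \succeq 0$, $\mu_m\pb{k} \succeq 0$, $d_m\pb{k} < 0$ are preserved under multiplication by any positive number. Crucially the blocks do not couple across $(m,k)$, so each may be rescaled independently. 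I would therefore define $c_m\pb{k} = 1/\max\{1, \norm{{A_{m}}^T \lambda^o_m\pb{k}}_{2}\}$ and set $\lambda_m\pb{k} = c_m\pb{k}\,\lambda^o_m\pb{k}$, $\mu_m\pb{k} = c_m\pb{k}\,\mu^o_m\pb{k}$, $d_m\pb{k} = c_m\pb{k}\,d^o_m\pb{k}$, then verify that homogeneity keeps both equalities at zero, that $c_m\pb{k} > 0$ preserves the strict sign of $d$, and that $\norm{{A_{m}}^T \lambda_m\pb{k}}_{2} = c_m\pb{k}\,\norm{{A_{m}}^T \lambda^o_m\pb{k}}_{2} \leq 1$ by construction, so the transformed triple is feasible for~\eqref{eq:original_dual}. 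The points to watch are the degenerate case $\norm{{A_{m}}^T \lambda^o_m\pb{k}}_{2} = 0$, where the $\max$ forces $c_m\pb{k} = 1$ and no rescaling is needed, and confirming that the strict inequality $d_m\pb{k} < 0$ is never turned into an equality, which holds precisely because $c_m\pb{k}$ is always strictly positive.
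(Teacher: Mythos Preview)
Your proposal is correct and follows essentially the same route as the paper: the feasible-set inclusion and inequality~\eqref{eq:dual_bnd} are dispatched by the trivial ``one extra constraint'' observation, and feasibility for~\eqref{eq:original_dual} is recovered by the same blockwise positive rescaling of $(\mu_m\pb{k},\lambda_m\pb{k},d_m\pb{k})$ by $1/\max\{1,\|A_m^T\lambda^o_m\pb{k}\|_2\}$ that the paper writes out as a case split. Your treatment is in fact more careful than the paper's, since you make explicit the homogeneity of the equality constraints, the block-decoupling across $(m,k)$, and the preservation of the strict sign on $d_m\pb{k}$.
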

\begin{proof}
Since optimization problem in~\eqref{eq:original_dual} has one more type of constraints than problem~\eqref{eq:relax_dual}, it is trivial to show $\{\tilde{\bm{\mu}}, \tilde{\bm{\lambda}}, \tilde{\bm{d}}\}$ also satisfies the constraints to problem~\eqref{eq:relax_dual}, thus inequality~\eqref{eq:dual_bnd} satisfies as well.

We prove the left statement by constructing such a transformation. 
Assume $\{\bm{\mu}^{o}, \bm{\lambda}^{o}, \bm{d}^{o}\}$ is the optimal solution to~\eqref{eq:relax_dual},
construct a new combination of dual variables, $\{\bm{\mu}^{m}, \bm{\lambda}^{m}, \bm{d}^{m}\}$, as
for $k = 1, \dots K$, and $m = 1, \dots, M$,
if $\norm{{A_{{m}}}^T \lambda^o_m{\pb{k}}}_{2} > 1$,
$\lambda^m_{m}\pb{k} = \frac{1}{\norm{{A_{{m}}}^T \lambda^o_m{\pb{k}}}} \lambda^o_{m}\pb{k}$,
$\mu^m_{m}\pb{k} = \frac{1}{\norm{{A_{{m}}}^T \lambda^o_m{\pb{k}}}} \mu^o_{m}\pb{k}$,
$d^m_{m}\pb{k} = \frac{1}{\norm{{A_{{m}}}^T \lambda^o_m{\pb{k}}}} d^o_{m}\pb{k}$;
otherwise, 
$\lambda^m_{m}\pb{k} = \lambda^o_{m}\pb{k}$,
$\mu^m_{m}\pb{k} = \mu^o_{m}\pb{k}$,
$d^m_{m}\pb{k} = d^o_{m}\pb{k}$.

By the scaling transfer above, we keep $\norm{{A_{{m}}}^T \lambda^m_m{\pb{k}}}_{2}$ smaller than $1$ and satisfying all the other constraints in~\eqref{eq:relax_dual}, thus $\{\bm{\mu}^{m}, \bm{\lambda}^{m}, \bm{d}^{m}\}$ satisfies the constraints in~\eqref{eq:original_dual}.
\end{proof}

Moreover, since the above scaling is strictly larger than $1$, it is able to show the further relation between optimum solutions to problem~\eqref{eq:original_dual} and problem~\eqref{eq:relax_dual} as following,
\begin{proposition}
\label{thm:dual_bnd2}
Given
 $\{\tilde{\bm{\mu}}, \tilde{\bm{\lambda}}, \tilde{\bm{d}}\}$ as the optimum solution to~\eqref{eq:original_dual},
$\{\bm{\mu}^{m}, \bm{\lambda}^{m}, \bm{d}^{m}\}$ as the transfer constructed in Proposition~\ref{thm:dual1} to the optimum solution to~\eqref{eq:relax_dual},
\begin{equation*}
  \begin{aligned}
    & \sum_{m=1}^{M} \sum_{k=1}^{K} \tilde{d}_m\pb{k} \leq 
      \sum_{m=1}^{M} \sum_{k=1}^{K} d^m_m\pb{k} \leq \\
    & \hspace{1.0em} \frac{1}{\beta} \sum_{m=1}^M \norm{{A_{{m}}}^T \lambda^m_m{\pb{k}}}_{2}^2
    + \sum_{m=1}^{M} \sum_{k=1}^{K} d^m_m\pb{k}.
  \end{aligned}
\end{equation*}
\end{proposition}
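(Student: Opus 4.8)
The plan is to prove the two inequalities separately: the left one follows from optimality of the original solution combined with the feasibility established in Proposition~\ref{thm:dual1}, and the right one is just nonnegativity of the quadratic penalty term.

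First I would handle the left inequality $\sum_{m=1}^M\sum_{k=1}^K \tilde{d}_m\pb{k} \le \sum_{m=1}^M\sum_{k=1}^K d^m_m\pb{k}$. Proposition~\ref{thm:dual1} guarantees that the transferred point $\{\bm{\mu}^{m},\bm{\lambda}^{m},\bm{d}^{m}\}$ is feasible for problem~\eqref{eq:original_dual}, whose objective is precisely $\sum_{m,k} d_m\pb{k}$. Since $\{\tilde{\bm{\mu}},\tilde{\bm{\lambda}},\tilde{\bm{d}}\}$ is by assumption a minimizer of~\eqref{eq:original_dual}, its objective value cannot exceed the objective attained at any other feasible point; evaluating that objective at the transferred point gives exactly the claimed bound. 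This is the only step with real content, and it leans entirely on the feasibility half of Proposition~\ref{thm:dual1}.

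The right inequality is immediate: with $\beta>0$ the term $\frac{1}{\beta}\sum_{m=1}^M \norm{A_m^T\lambda^m_m\pb{k}}_2^2$ is nonnegative, so adding it to $\sum_{m,k} d^m_m\pb{k}$ cannot decrease the value. I therefore do not expect any genuine obstacle. The one thing I would double-check carefully is that the scaling in Proposition~\ref{thm:dual1} really does keep the transferred point admissible for~\eqref{eq:original_dual}: the two equality constraints scale homogeneously and so remain satisfied, the cone and positivity conditions survive because the scaling factor is positive, and each $d^m_m\pb{k}$ stays strictly negative since dividing a negative number by $\norm{A_m^T\lambda^o_m\pb{k}}>1$ only moves it closer to zero. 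I would note in passing that this same observation — division by a quantity strictly above $1$ — is exactly what one would invoke if a strict version of the left inequality were desired, although the statement as given asks only for the non-strict bound.
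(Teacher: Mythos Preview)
Your proposal is correct and matches the paper's reasoning. The paper does not give a formal proof of this proposition at all; it merely precedes the statement with the remark ``since the above scaling is strictly larger than $1$,'' relying implicitly on exactly the two observations you spell out: feasibility of the transferred point for~\eqref{eq:original_dual} (hence the left inequality by optimality of $\{\tilde{\bm{\mu}},\tilde{\bm{\lambda}},\tilde{\bm{d}}\}$) and nonnegativity of the quadratic penalty (hence the right inequality).
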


Propositions~\ref{thm:dual1} and~\ref{thm:dual_bnd2} show the relation between the dual warm up QCQP~\eqref{eq:original_dual} and QP~\eqref{eq:relax_dual}.
From the aspect of numerical computation, we apply the QP problem~\eqref{eq:relax_dual} for dual warm up since it is more efficient to solve. 

\subsection{MPC problem reformulation}
\label{subsec:relaxation}
To increase trajectory smoothness and reduce control efforts, we first define the cost term, $l(\cdot)$, in problem~\eqref{eq:original} as,
\begin{equation}
  \label{eq:cost_func}
  \begin{aligned}
    &\alpha_{x}\norm{x\pb{k}}_2^2 + \alpha_{x'}\norm{x\pb{k} - x\pb{k-1}}^2_2 \\
    & \hspace{1.0em} +\alpha_{u} \norm{u\pb{k-1}}_2^2 + \alpha_{\tilde{u}} \norm{u\pb{k-1} - \tilde{u}\pb{k-1}}_2^2,
   \end{aligned}
\end{equation}
where $\alpha_{x}$, $\alpha_{x'}$, $\alpha_{u}$ and $\alpha_{\tilde{u}}$ are corresponding hyper-parameters.
In Eq.~\eqref{eq:cost_func}, the first two terms measure the trajectory's first and second order smoothness respectively, the third term measures control energy usage, and the fourth term measures differences between two consecutive control decisions, where $\tilde{u}$ represents the previous control decision at the corresponding time step.
Note that the fourth term is crucial to road tests where the real vehicle is involved. The control actuators, i.e., steering, braking and throttling, have limited reaction speed. Large fluctuations between consecutive control decisions may make the actuator fail to track the desired control commands.

This basic cost function in Eq.~\eqref{eq:original} is further modified in TDR-OBCA to overcome two major practical issues: 1) $d_{min}$ in equation is a hyper-parameter for minimum safety distance, but it is hard to tune for general scenarios; 
2) both initial and end state constraints make the nonlinear optimization solver slow and, in extreme case, hard to find a feasible solution.

To address these issues, we introduce two major modifications to the MPC problem~\eqref{eq:original}.
First, we introduce a collection of slack variables, $\bm{d}$, which is defined in section~\ref{subsec:dual_variable} to the MPC problem; second the end state constraints are relaxed to soft constraints inside the cost function.
The new cost function is
\begin{equation}
\begin{aligned}
&\mathcal{J}~\pb{\bm{x}, \bm{u}, \bm{d}} = \sum_{k=1}^{K} l~\pb{x\pb{k}, u\pb{k-1}} \\
& \hspace{1.0em} + \alpha_e \norm{x\pb{K} - x_F}_2^2 + \beta \sum_{m=1}^{M} \sum_{k=1}^{K} d_m\pb{k},
\end{aligned}
\end{equation}
where $\alpha_e > 0$ is the hyper-parameter to minimize the final state's position to the target, and $\beta > 0$ is the hyper-parameter to those cost terms which maximize the total safety distances between vehicle and obstacles.
The reformulated MPC problem with slack variables is formulated as,

\begin{equation}
  \label{eq:relax}
  \begin{aligned}
    & \min_{
        \bm{x},~\bm{u},~\bm{d},~\bm{\mu},~\bm{\lambda}
      }
    \mathcal{J}~\pb{\bm{x}, \bm{u}, \bm{d}} \\
    & \text{subject to:} \\
    & \hspace{1.0em} \quad x\pb{0} = x_0,\\
    & \hspace{2.0em} 
    x\pb{k+1} = f\pb{x\pb{k}, u\pb{k}}, \\
    & \hspace{2.0em}
    h\pb{x\pb{k}, u\pb{k}} \leq 0, \\
    & \hspace{2.0em}
    -g^T \mu_m{\pb{k}}
        + \pb{A_{m} t\pb{x\pb{k}} - b_{m}}^T \lambda_m{\pb{k}}
        \\
    & \hspace{4.0em}
       + d_{m}\pb{k} = 0, \\
    & \hspace{2.0em} 
    G^T \mu_m{\pb{k}}
        + R\pb{x\pb{k}}^T {A_{m}}^T \lambda_m{\pb{k}}
        = 0, \\
    & \hspace{2.0em}
    \norm{{A_{{m}}}^T \lambda_m{\pb{k}}}_{2} \leq 1, \\
    & \hspace{2.0em}
    \lambda_{m} \pb{k} \succeq 0,
    \mu_m {\pb{k}} \succeq 0,
    d_m\pb{k} < 0, \\
    & \hspace{4.0em}
    \text{for } k = 1, \dots K,
        \text{ } m = 1, \dots, M,
  \end{aligned}
\end{equation}
where the notations to $d_m\pb{k}$ and $\bm{d}$ follow the problem~\eqref{eq:original_dual}.

\section{Experiment results}
\label{sec:experiments}
TDR-OBCA algorithm is validated in both simulations and real world road tests.
In subsection~\ref{sec:simulation}, we start showing TDR-OBCA's robustness at different starting positions.
Then, we further verify TDR-OBCA's robustness and efficiency in end-to-end scenario-based simulations. 
In subsection~\ref{sec:road_experiment}, TDR-OBCA is deployed on real autonomous vehicles, which proves the trajectories provided by TDR-OBCA have control-level smoothness as well as its efficiency and robustness in real world.

\subsection{Simulations}
\label{sec:simulation}
In this subsection, we present two categories of simulations.
\subsubsection{Robustness to various starting positions}
\label{sec:simulation_simple}
We design a no-obstacle valet parking scenario with regular curb boundaries to show the robustness of TDR-OBCA over its competitive algorithms.
The simulation setups and parameters are shown in Fig.~\ref{figure:simulation_scenario}.
\begin{figure}[!h]
 \centering
\includegraphics[width=1.0\linewidth]{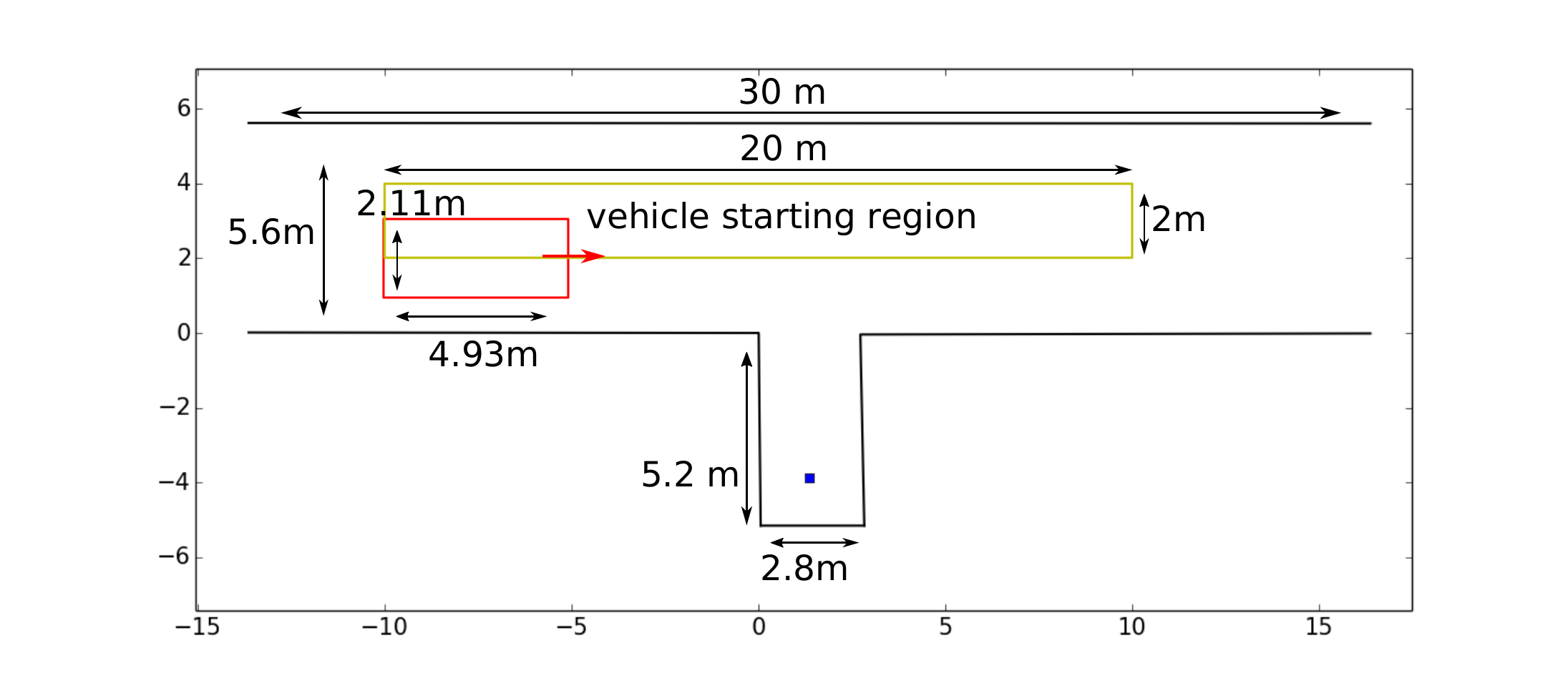}
 \caption{Robustness experiment settings. Ego vehicle's starting positions are within the yellow box. The red box is the ego vehicle at one starting position. The vehicle is $1769$~kg in weight with a $2.8$ $m$ wheelbase. Its steering range is $[-0.5, 0.5]$ $\text{rad}$ and steering rate is $[-0.5, 0.5]$ $\text{rad}/s$. The acceleration range is set to $[-1, 1]$ $m/s^2$ and the speed range is $[-1, 2]$ $m/s$.}
 \label{figure:simulation_scenario}
\end{figure}

The simulation includes 80 cases identified by ego vehicle's different starting positions.
The starting positions are evenly sampled on a grid of $x \in [-10, 10]\ m$ with interval $1.0\ m$ and $y \in [2, 4]\ m$ with interval $0.5\ m$. 
The starting heading angle is set to be zero, facing to the right hand side in Fig.~\ref{figure:simulation_scenario}. The ending parking spot remains the same for all cases. 
The optimization problems are implemented and simulated with a i7 processor clocked at 2.6 GHz.

\begin{table}[!h]
\centering
\caption{Robustness statistics for cases in Fig.~\ref{figure:simulation_scenario}. The failure rate is calculated by the number of cases when the algorithm fails to solve the problem divided by the total case number.}
\normalsize
\setlength{\tabcolsep}{0.5em}
\begin{tabular}{p{2.5cm} |p{2.5cm} |p{2.5cm} }
 \hline
Algorithm & Failure Rate & \textbf{Reduced Rate}\\
 \hhline{=:=:=}
H-OBCA & $37.5\%$ & \textbf{$\diagdown$} \\
 \hline
TD-OBCA & $12.5\%$ & \textbf{66.67\%}\\
 \hline
TDR-OBCA & $1.25\%$ & \textbf{96.67\%}\\
 \hline
 
\end{tabular}
\label{table:simulations}
\end{table}
To show robustness, in Table~\ref{table:simulations} we compare failure rates of three algorithms.
They are: 1) H-OBCA as the benchmark; ii) TD-OBAC algorithm with TDR-OBCA's warm starts proposed in Section~\ref{subsec:temporal_warm_start} and~\ref{subsec:dual_variable}; iii) complete TDR-OBCA algorithm.
The failure rate drops from the benchmark's $37.5 \%$ to $1.25 \%$ by applying TDR-OBCA, whereas the rest of  failures are due to violation of vehicle dynamics.

Besides robustness, we also compare the optimal control output from H-OBCA and TDR-OBCA.
Fig.\ref{figure:smoothness} shows the steering output from H-OBCA and TDR-OBCA from one of the 80 cases as a typical example.
The y-axis is the steering angle in rad. 
The steering angle output from TDR-OBCA (the red line) has less sharp turns compared to that from H-OBCA (the blue line). Similarly, in the plots to acceleration and jerk, which is the rate of change of acceleration with time, TDR-OBCA also shows more smoothness compared to the trajectory provided by H-OBCA.

As a supplement to Fig.~\ref{figure:smoothness}, we compare the minimum, maximum and mean control outputs along with jerks of the two algorithms in Table ~\ref{table:simulation_smoothness}. On average, TDR-OBCA reduced $13.53\%$ steering output, $1.42\%$ acceleration and $3.34\%$ jerk compared with H-OBCA. That means the steering commands generated by TDR-OBCA are smoother, the accelerations and jerks for TDR-OBCA are a little better than H-OBCA.

\subsubsection{End-to-end scenario-based simulations}
\label{sec:simulation_apollo}
In this subsection, we demonstrate TDR-OBCA performance in end-to-end simulations.


\begin{figure}[!htb]
\begin{minipage}{0.49\textwidth}
 \centering
 \subfloat[Steering comparison.\label{figure:steering_smooth}]{\includegraphics[width=0.95\linewidth]{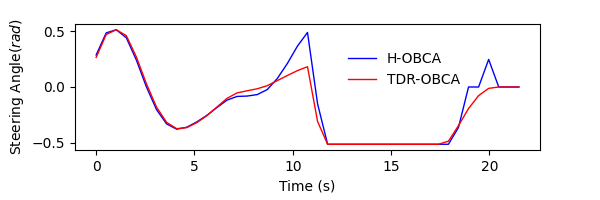}}
\end{minipage}\hfill
\begin{minipage}{0.49\textwidth}
 \centering
 \subfloat[Acceleration comparison\label{figure:acc_smooth}]{\includegraphics[width=0.95\linewidth]{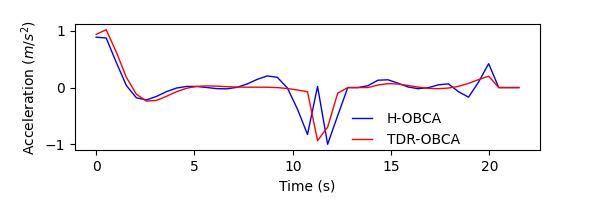}} 
\end{minipage}\hfill
\begin{minipage}{0.49\textwidth}
 \centering
 \subfloat[Jerk comparison\label{figure:jerk_smooth}]{\includegraphics[width=0.95\linewidth]{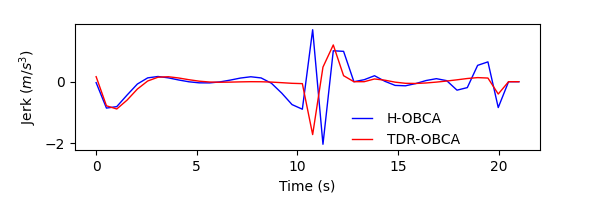}} 
\end{minipage}\hfill
\caption{Control outputs to H-OBCA and TDR-OBCA.}
\label{figure:smoothness}
\end{figure} 



\begin{table}[h!]
\centering
\caption{Trajectory evaluation results over the successful cases.}
\setlength{\tabcolsep}{0.5em}
\begin{tabular}{p{1.7cm}|p{1.3cm} |p{1.6cm} |p{1.6cm}}
 \hhline{--|-|-}
 &  &\textbf{H-OBCA} & \textbf{TDR-OBCA}\\
 \hhline{--|-|-}
   Steering& Mean &$0.2048$ & \textbf{0.1771} \\
 \hhline{~-|-|-}
   Angle& Max &$3.2762$ & $3.2762$ \\
  \hhline{~-|-|-}
   ($rad$)& Min &$-3.8644$ & $-3.8644$ \\
        \hhline{~-|-|-}
   & Std Dev & $0.4317$ & $0.4285$ \\
   \hhline{==|=|=}
 & Mean &$0.3392$ & \textbf{0.3344} \\
 \hhline{~-|-|-}
    Acceleration& Max &$5.3351$ & $5.3351$ \\
  \hhline{~-|-|-}
   ($m/s^{2}$)& Min &$-0.5127$ & $-0.5127$ \\
     \hhline{~-|-|-}
   & Std Dev & $0.4074$ & $0.4215$ \\
   \hhline{==|=|=}
    & Mean &$ 0.2663$ & \textbf{0.2574} \\
 \hhline{~-|-|-}
    Jerk& Max &$10.0583$ & $10.6021$ \\
  \hhline{~-|-|-}
   ($m/s^{3}$)& Min & $-10.2744$ & $-10.2744$ \\
  \hhline{~-|-|-}
   & Std Dev & $0.6215$ & $0.6731$ \\
   \hhline{--|-|-}
 \hline
\end{tabular}
\label{table:simulation_smoothness}
\end{table}

We introduce different types of obstacles and application scenarios (i.e., starting and ending positions). 
Among hundreds of cases presented at~\url{https://bce.apollo.auto}, we chose three typical scenario types, which are valet parking, parallel parking and hailing.
TDR-OBCA successfully generates trajectories for all these scenarios as shown in Fig.~\ref{figure:simulation_reuslts}, which further verifies its robustness.
With different environments, initial and destination spots, TDR-OBCA is able to provide collision free trajectories for most of the cases.

\begin{figure}[!htb]
\begin{minipage}{0.16\textwidth}
 \centering
 \subfloat[valet parking\label{figure:valet_parking_simulation}]{\includegraphics[width=.9\linewidth]{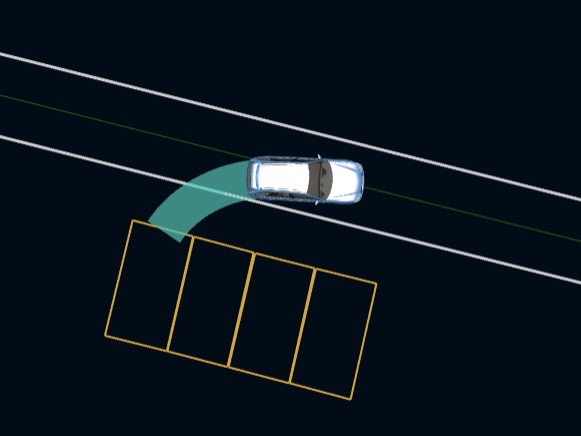}}
\end{minipage}\hfill
\begin {minipage}{0.16\textwidth}
 \centering
 \subfloat[pull over\label{figure:parallel_parking_simulation}]{\includegraphics[width=.9\linewidth]{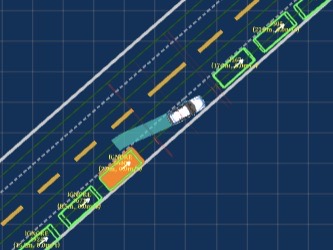}} 
\end{minipage}\hfill
\begin{minipage}{0.16\textwidth}
 \centering
 \subfloat[hailing\label{figure:hailing_simulation}]{\includegraphics[width=.9\linewidth]{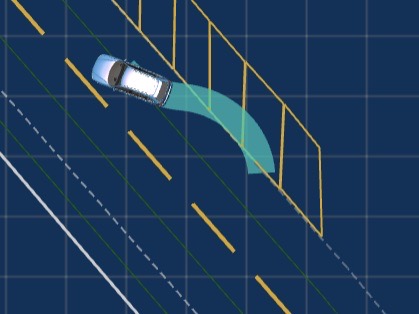}}
\end{minipage}\hfill
\begin{minipage}{0.16\textwidth}
 \centering
 \subfloat[valet parking \newline trajectory\label{figure:valet_parking_zigzag}]{\includegraphics[width=.9\linewidth]{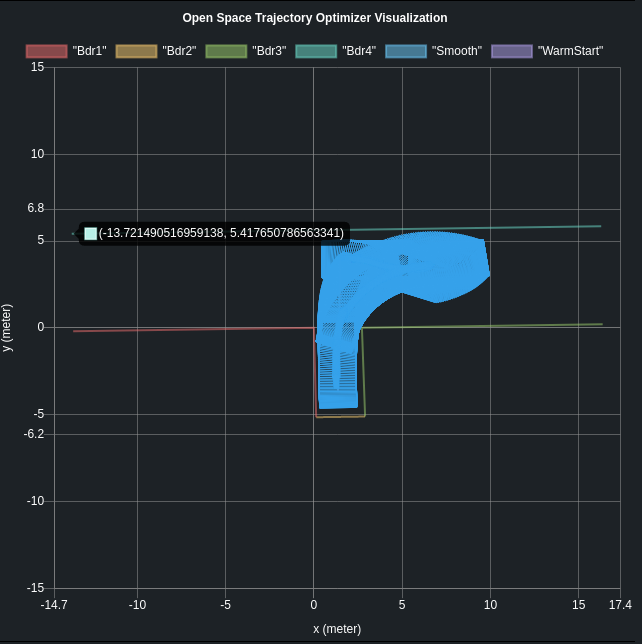}}
\end{minipage}\hfill
\begin{minipage}{0.16\textwidth}
 \centering
 \subfloat[pull over \newline trajectory\label{figure:parallel_parking_zigzag}]{\includegraphics[width=.9\linewidth]{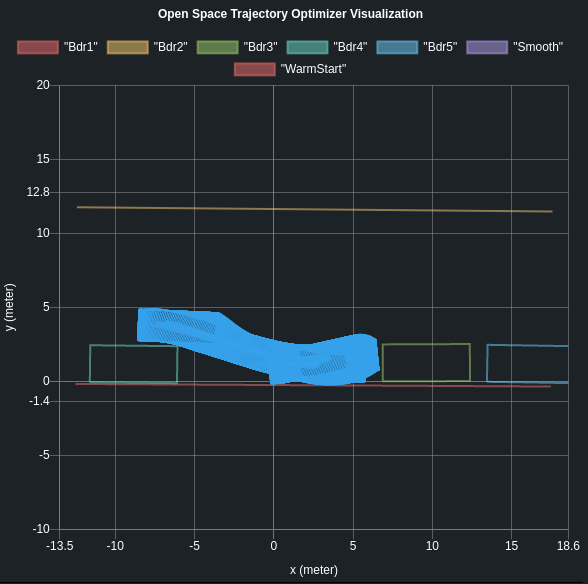}}
\end{minipage}\hfill
\begin{minipage}{0.16\textwidth}
 \centering
 \subfloat[hailing \newline trajectory \label{figure:hailing_zigzag}]{\includegraphics[width=.9\linewidth]{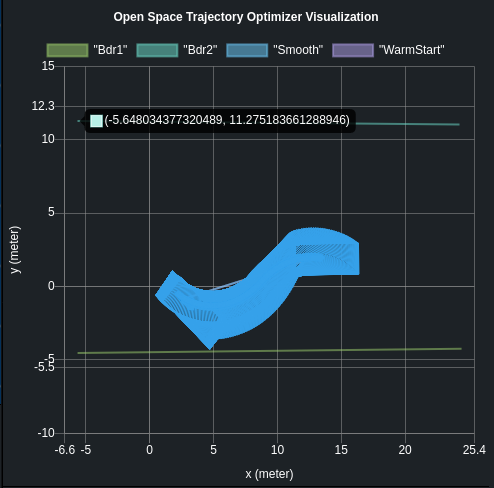}}
\end{minipage}\hfill
\caption{Planning trajectories in simulation including scenarios: 1) valet parking:~\ref{figure:valet_parking_simulation},~\ref{figure:valet_parking_zigzag}; 2) pull over:~\ref{figure:parallel_parking_simulation},~\ref{figure:parallel_parking_zigzag}; 3) hailing:~\ref{figure:hailing_simulation},~\ref{figure:parallel_parking_zigzag}.}
\label{figure:simulation_reuslts}
\end{figure}

For valet parking scenario cases in Fig.~\ref{figure:simulation_reuslts}, their settings are identified by different curb shapes and surrounding obstacles as shown in Fig.~\ref{figure:complex_simulation_reuslts}.
We classify obstacles to two types. Type A obstacles are road boundaries, and Type B obstacles are vehicles or pedestrians.
H-OBCA and TDR-OBCA's computation time and the line segment numbers of both types of obstacles in each simulation are listed in Table~\ref{table:valet_parking_complex_simulation_data}. 
Based on this table, when the simulation becomes more complicated from case ($a$) to ($e$), the total time cost by TDR-OBCA, $t_{t,TDR}$, is not increased as fast as H-OBCA, $t_{t,H}$, which validates TDR-OBCA's computation efficiency.


\begin{figure*}[!htb]
\begin{minipage}{0.19\textwidth}
 \centering
 \subfloat[Pedestrian Obstacle\label{figure:pedestrain_obstacle}]{\includegraphics[trim=0 0 0 0, clip, width=1.0\linewidth]{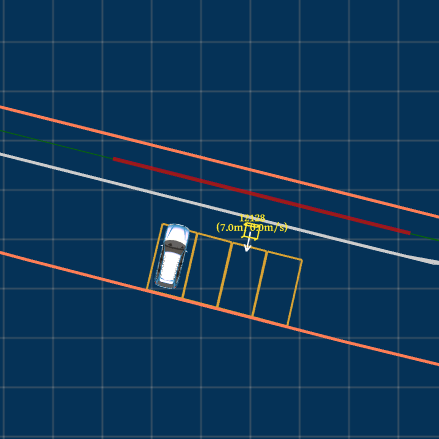}} 
\end{minipage}\hfill
\begin{minipage}{0.19\textwidth}
 \centering
 \subfloat[Vehicle Obstacle\label{figure:vehicle obstacle}]{\includegraphics[width=1.0\linewidth]{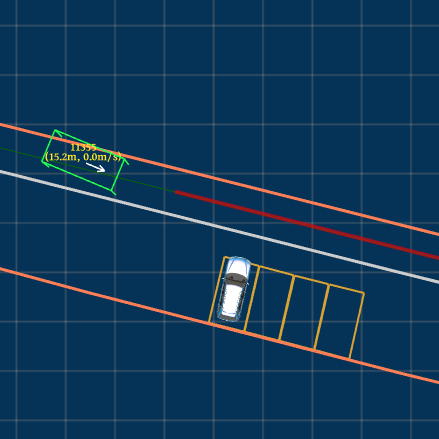}}
\end{minipage}\hfill
\begin{minipage}{0.19\textwidth}
 \centering
 \subfloat[Two Vehicle Obstacles\label{figure:two_vehicle_obstacles}]{\includegraphics[width=1.0\linewidth]{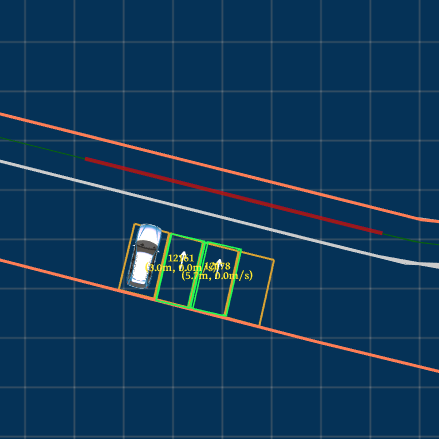}}
\end{minipage}\hfill
\begin{minipage}{0.19\textwidth}
 \centering
 \subfloat[Two Pedestrian Obstacles\label{figure:two_pedestrain obstacles}]{\includegraphics[width=1.0\linewidth]{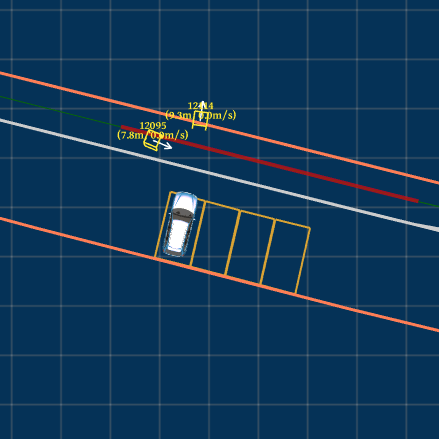}}
\end{minipage}
\begin{minipage}{0.19\textwidth}
 \centering
 \subfloat[Curved RoI Boundary\label{figure:curved_roi_boundary}]{\includegraphics[width=1.0\linewidth]{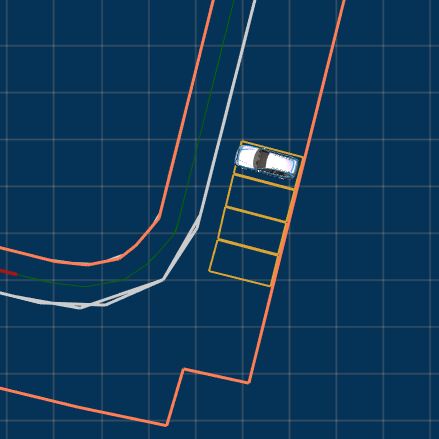}}
\end{minipage}
\caption{Settings for end-to-end valet parking simulation cases in Table~\ref{table:valet_parking_complex_simulation_data}}
\label{figure:complex_simulation_reuslts}
\end{figure*}


\begin{table}[h!]
\centering
\caption{Valet parking, computation time scales w.r.t. complex simulations in Fig.~\ref{figure:complex_simulation_reuslts}, where $N_{OA}$ is the number of Type A obstacles, i.e., RoI boundary line segments, and $N_{OB}$ is the number of Type B obstacles, i.e., vehicles' or pedestrians' boundary line segments. $t_{f,H}$ and $t_{f,TDR}$ are the mean time to generate each planning frame by H-OBCA and TDR-OBCA respectively. 
$t_{t,H}$ and $t_{t,TDR}$ are the total trajectory generation time by H-OBCA and TDR-OBCA respectively.
All the time are measured in second. 
}
\setlength{\tabcolsep}{0.5em}
\begin{tabular}{c|c|c|c|c|c}
\hline
 Case ID & (a) & (b) & (c) & (d) & (e) \\
 \hline
 $N_{OA}$ &5 &5 &6 &6&9\\
 \hline
 $N_{OB}$ & 4&4 &8 &8&0\\
 \hhline{=:=:=:=:=:=}
 $t_{f,H}$ (s)& N.A. &0.029& N.A.& 0.021& N.A. \\
 \hline
 $t_{f,TDR}$ (s) &0.017 &  0.016 & 0.022 & 0.019 & 0.015\\
 \hline
 \textbf{Improved Rate} & N.A. & \textbf{44.82\%} & N.A.& \textbf{9.52\%} & N.A. \\
  \hhline{=:=:=:=:=:=}
  $t_{t,H}$ (s) & N.A. & 1.80 & N.A. & 2.52 & N.A. \\
  \hline
   $t_{t,TDR}$ (s) & 1.08 & 1.74 &1.97 & 1.61 & 2.29 \\
   \hline
    \textbf{Improved Rate} & N.A. & \textbf{3.33}\% & N.A.& \textbf{36.11\%} & N.A. \\
 \hline
\end{tabular}
\label{table:valet_parking_complex_simulation_data}
\end{table}

Fig.~\ref{figure:complex_simulation_reuslts} and Table~\ref{table:valet_parking_complex_simulation_data} show TDR-OBCA's robustness and efficiency in two aspects, i) TDR-OBCA is able to handle cases where H-OBCA fails; 2) the TDR-OBCA's computation time is less than H-OBCA.
Moreover, figures in Fig.~\ref{figure:simulation_reuslts} show TDR-OBCA's robustness in different application scenarios, such as valet parking, pulling over and hailing.

\subsubsection{Simulation Summary}
\label{sec:simulation_summary}
TDR-OBCA is robust compared to competitive algorithms.
Together with the regular planner, it is able to handle different scenarios with various obstacles, which is essential for autonomous driving. Furthermore, in the next subsection with real road test results, we show the trajectories generated by TDR-OBCA meet control-level smoothness for autonomous driving.

\subsection{Real world road tests}
\label{sec:road_experiment}
We have integrated TDR-OBCA with the planner module in Apollo Open-Source Autonomous Driving Platform and validated its robustness and efficiency with hundreds of hours of road tests in both USA and China.
Fig.~\ref{figure:open_space_architect_1} shows the architecture of the trajectory provider with highlighted TDR-OBCA algorithm module.
The trajectory provider takes inputs from map, localization, routing, perception and prediction modules to formulate the Region of Interest and transfers obstacles into line segments.
The trajectories generated by TDR-OBCA are passed into three post-processing modules before being sent to the vehicle's control layer.
In trajectory stitching module, each trajectory is stitched with respect to the ego vehicle's position.
The safety check module guarantees the ego vehicle has no collisions to moving obstacles.
The trajectory partition module divides the trajectory based on its gear location. 
Finally, the trajectory is sent to the control module, which communicates with Controller Area Network (CAN bus) module to drive the ego vehicle based on this provided trajectory.

\begin{figure}[]
\centering
\includegraphics[width=0.95\linewidth]{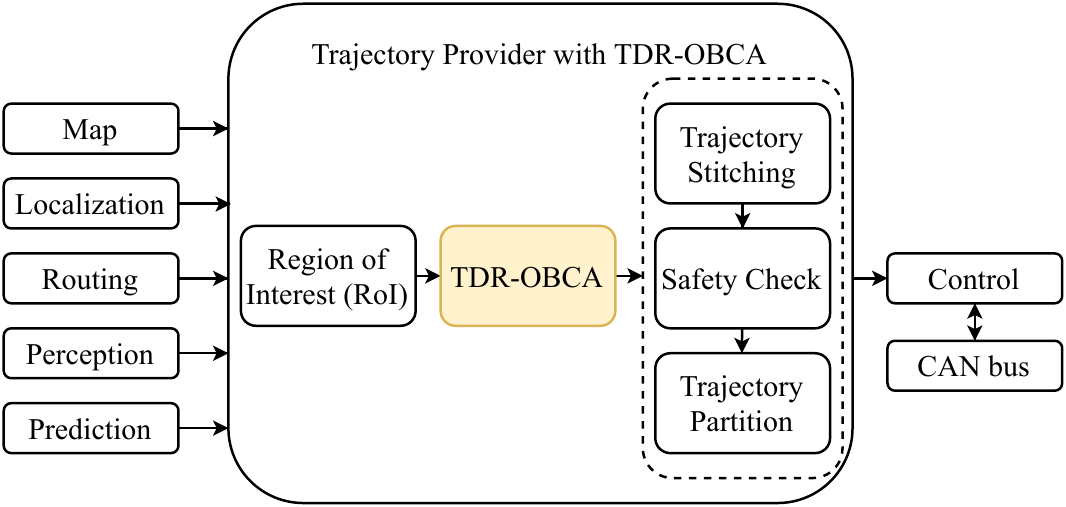}
\caption{TDR-OBCA application in Apollo Autonomous Driving Platform.}
\label{figure:open_space_architect_1}
\end{figure}

The three types of simulation scenarios presented in Section~\ref{sec:simulation_apollo} are all selected from hundreds of hours of real road tests.
For all their related road tests, the lateral control accuracy is high and in the range from 0.01 to 0.2 meters.

In Apollo planning module, we use a hybrid planer algorithm, which combines the free-space trajectory provider with the regular trajectory provider. TDR-OBCA is applied when the starting point or the end point of a trajectory is off driving road. Together with the regular driving planner, TDR-OBCA aims to maneuver the ego vehicle to a designated end position. It is usually applied but not limited to low speed scenarios. The road-test logic is shown in Algorithm~\ref{algorithm:parking}.

\begin{algorithm}
\SetAlgoLined
\KwResult{Maneuver the vehicle to from starting position to end position}
Generate RoI from map, routing and localization modules\;
Get obstacle shapes and locations from Perception and prediction modules\;
 \While{Scenario is not completed}{
 Check ego vehicle current status\;
 \If{At end pose}{
 Break \;
 }
  \eIf{Vehicle is on driving road}{
    Use on-road planner\;
}{Use TDR-OBCA, adjust position and maneuver forward or backward\;
Stitch trajectory making it start from the current vehicle position\;
Check collisions with moving obstacles\;
Divide trajectory according to gear positions\; }
   Generate control level smooth trajectory with speed profile\;
   Generate trajectory tracking control commands with control module\;
   Move vehicle with CAN bus module\;
   Update current CAN bus status for control module;
  }
\caption{Hybrid Planer with TDR-OBCA}
\label{algorithm:parking}
\end{algorithm}

Here we only take the valet parking scenario as an example to show TDR-OBCA's road-test performance in detail.
Fig.~\ref{figure:Valet_Parking} shows the results of three different valet parking experiments.

According to the ego vehicle's status, such as the rear-wheels-to-parking-spot distance and the heading, the car with our algorithm is able to generate either direct parking trajectories or zig-zag trajectories respectively.
Table~\ref{table:valet_parking_experimental_data} shows control performance (including lateral errors and heading angle errors at the end pose of each trajectory) of these three scenarios, where the controller of autonomous vehicle follows the planned trajectory generated by TDR-OBCA.
Our results confirm that the trajectories generated by TDR-OBCA are smooth, constrained by vehicle's kinodynamics and lead to low tracking errors in real road tests.

\begin{figure}[!h]
\begin{minipage}{0.16\textwidth}
 \centering
 \subfloat[stage 1: backward\label{figure:Valet_Parking_Reverse_1}]{\includegraphics[width=.9\linewidth]{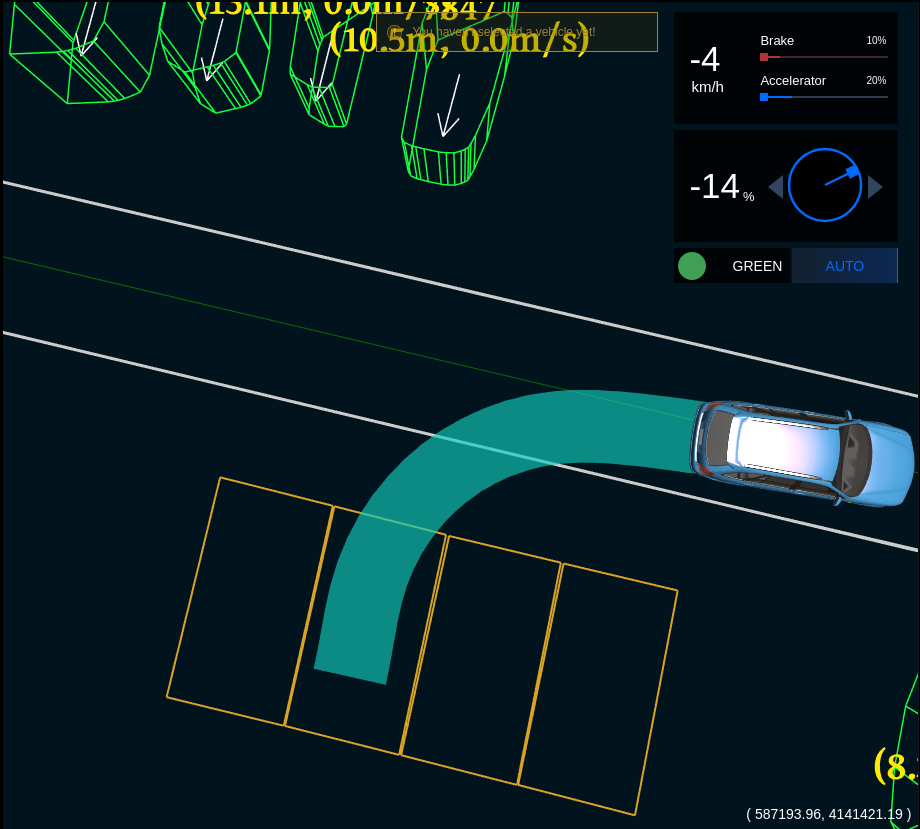}}
\end{minipage}\hfill
\begin {minipage}{0.16\textwidth}
 \centering
 \subfloat[stage 2: finishing\label{figure:Valet_Parking_Reverse_3}]{\includegraphics[width=.9\linewidth]{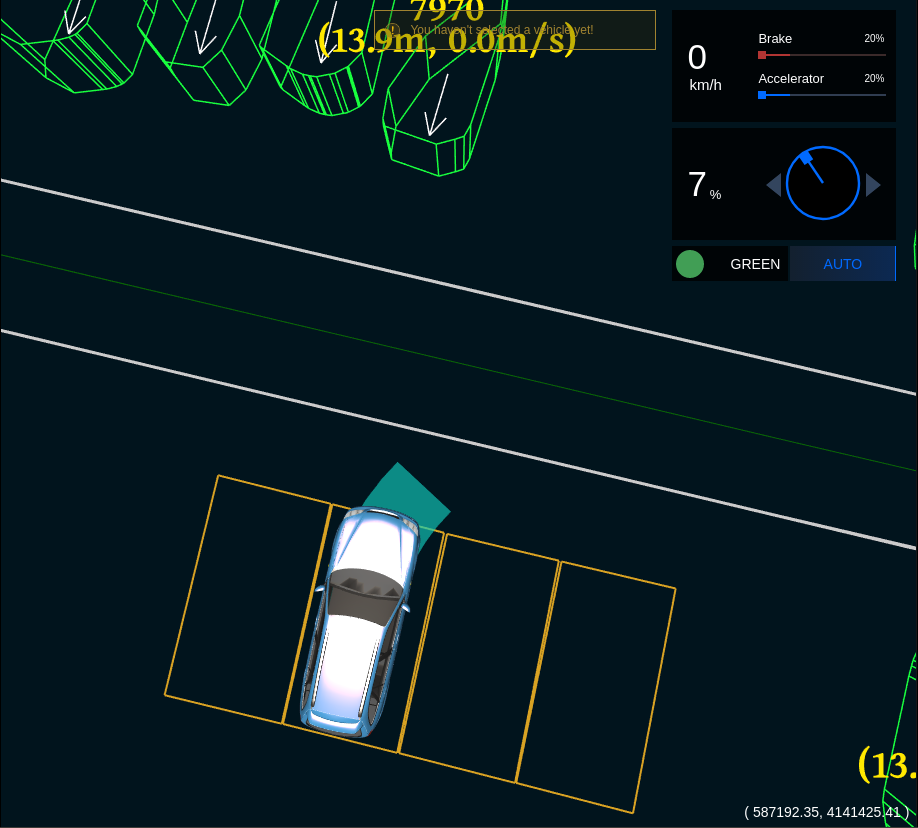}}
\end{minipage}\hfill
\begin{minipage}{0.16\textwidth}
 \centering
 \subfloat[whole trajectory\label{figure:Valet_Parking_Reverse_Trajectory_Optimizer_Visualization_2}]{\includegraphics[width=.9\linewidth]{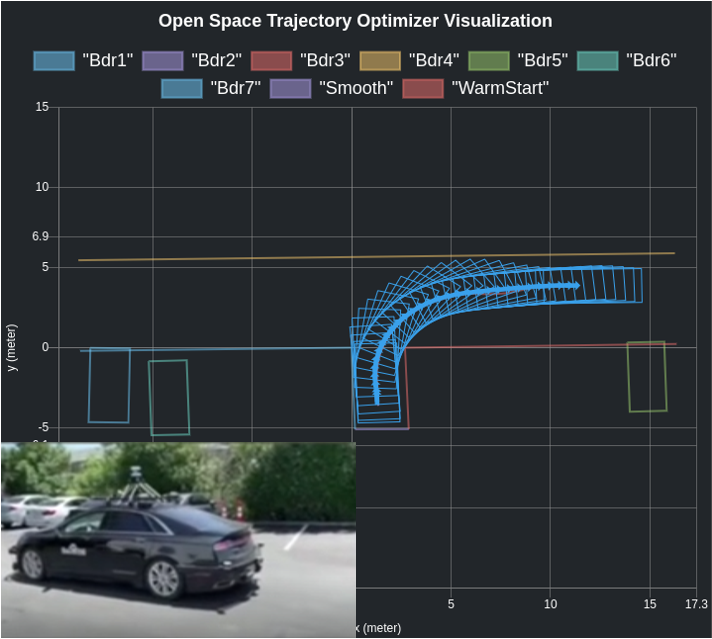}}
\end{minipage}\hfill
\begin{minipage}{0.16\textwidth}
 \centering
 \subfloat[stage 1: forward\label{figure:Valet_Parking_Short_Reverse_1}]{\includegraphics[width=.9\linewidth]{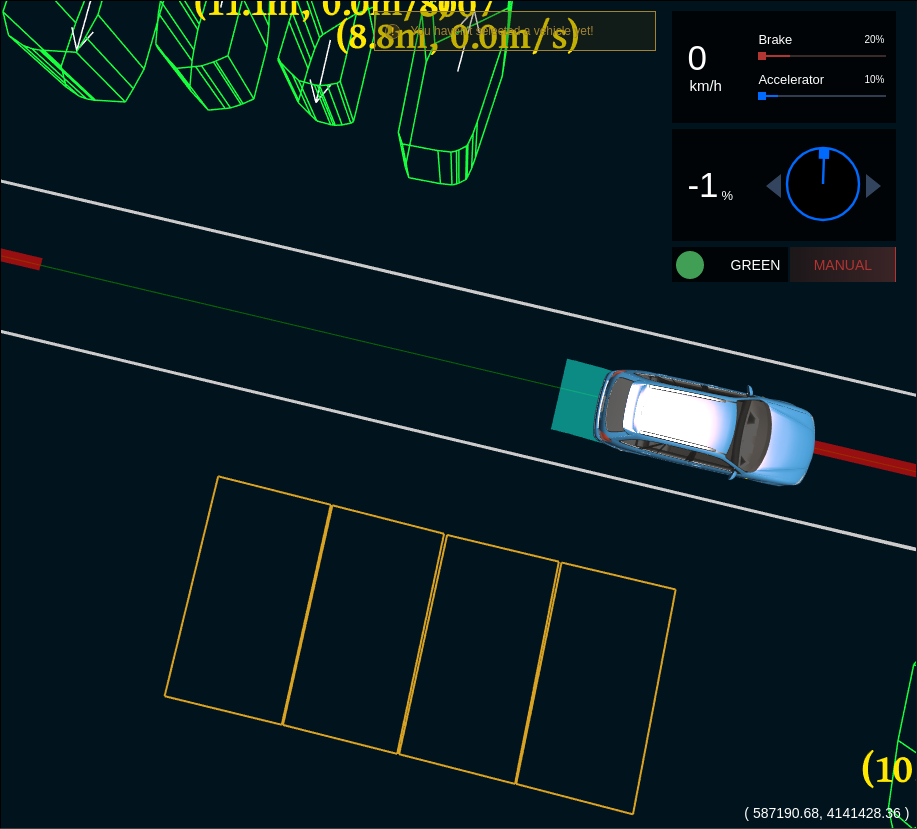}}
\end{minipage}\hfill
\begin {minipage}{0.16\textwidth}
 \centering
 \subfloat[stage 2: backward\label{figure:Valet_Parking_Short_Reverse_2}]{\includegraphics[width=.9\linewidth]{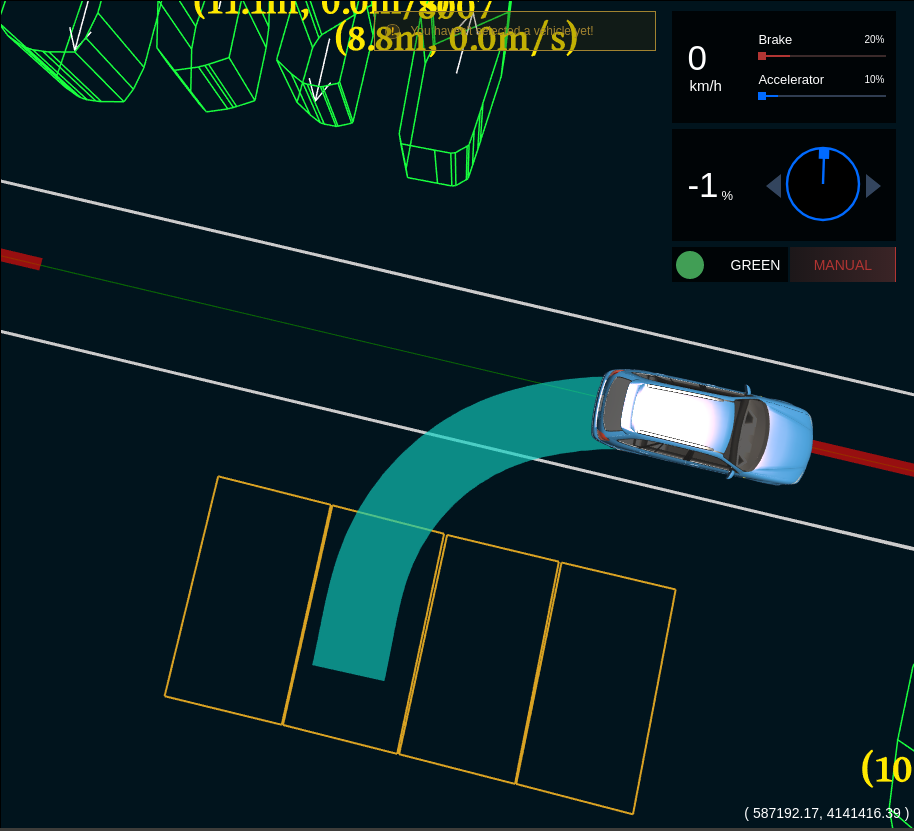}}
\end{minipage}\hfill
\begin{minipage}{0.16\textwidth}
 \centering
 \subfloat[whole trajectory\label{figure:Valet_Parking_Short_Reverse_Trajectory_Optimizer_Visualization_2}]{\includegraphics[width=.9\linewidth]{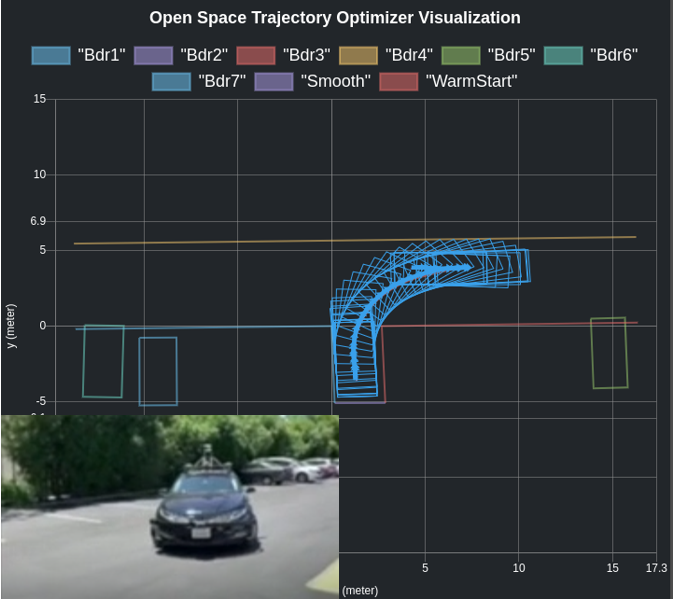}}
 \end{minipage}\hfill
\begin{minipage}{0.16\textwidth}
 \centering
 \subfloat[stage 1: forward\label{figure:Valet_Parking_Zigzag_1}]{\includegraphics[width=.9\linewidth]{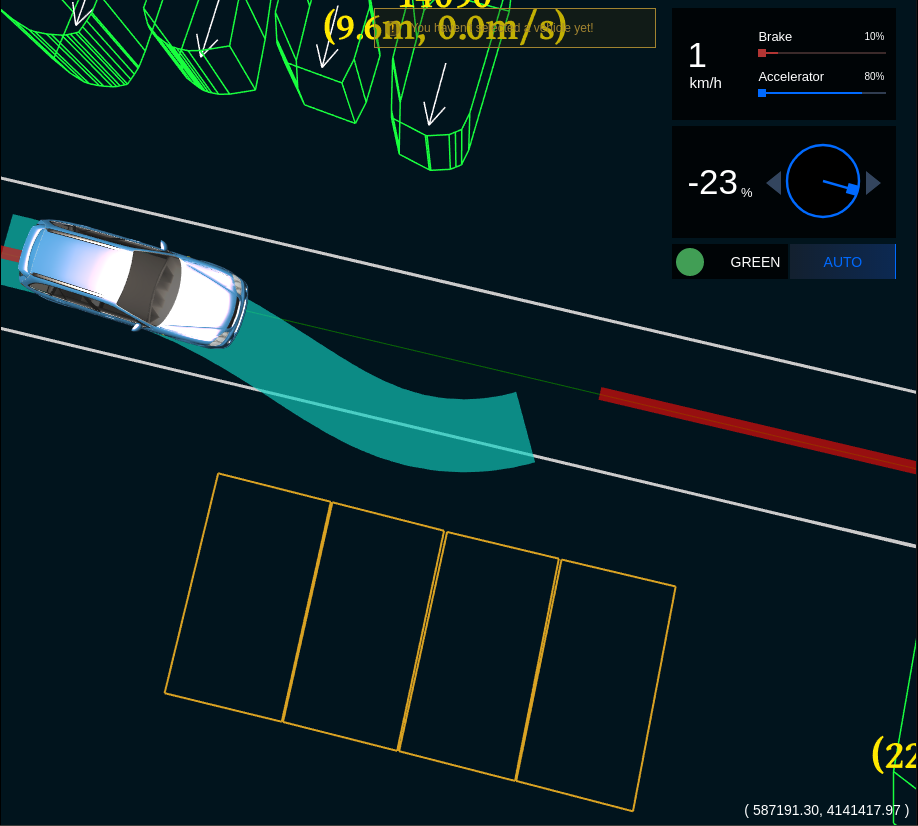}}
\end{minipage}\hfill
\begin{minipage}{0.16\textwidth}
 \centering
 \subfloat[stage 2: backward\label{figure:Valet_Parking_Zigzag_4}]{\includegraphics[width=.9\linewidth]{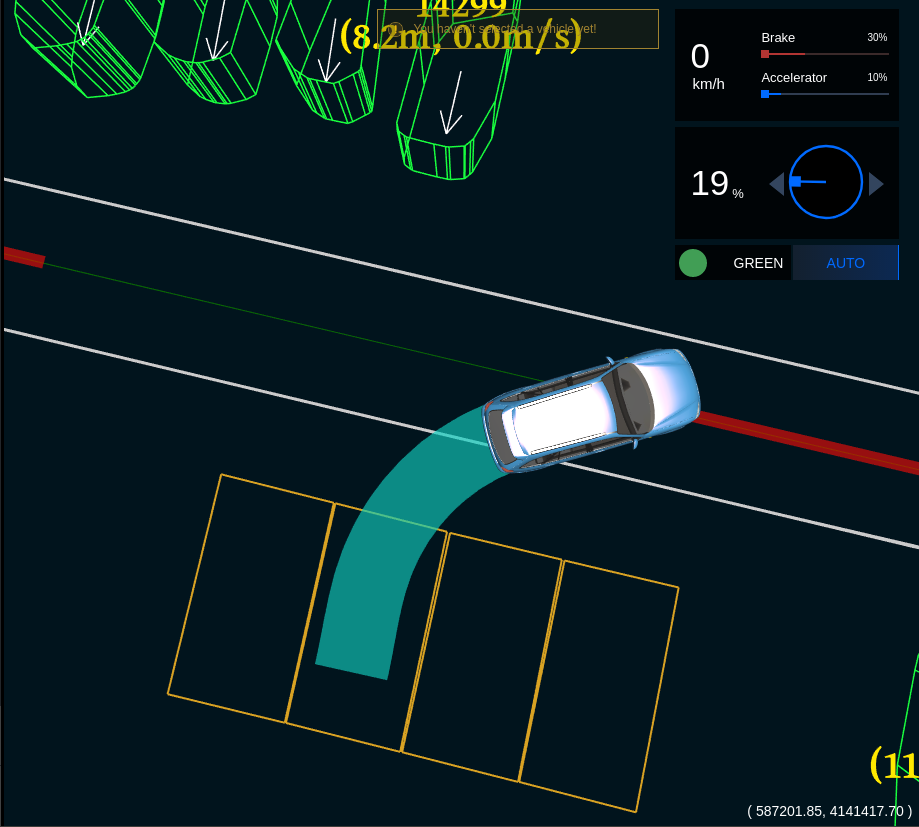}} 
\end{minipage}\hfill
\begin{minipage}{0.16\textwidth}
 \centering
 \subfloat[whole trajectory\label{figure:Valet_Parking_Zigzag_Trajectory_Optimizer_Visualization_2}]{\includegraphics[width=.9\linewidth]{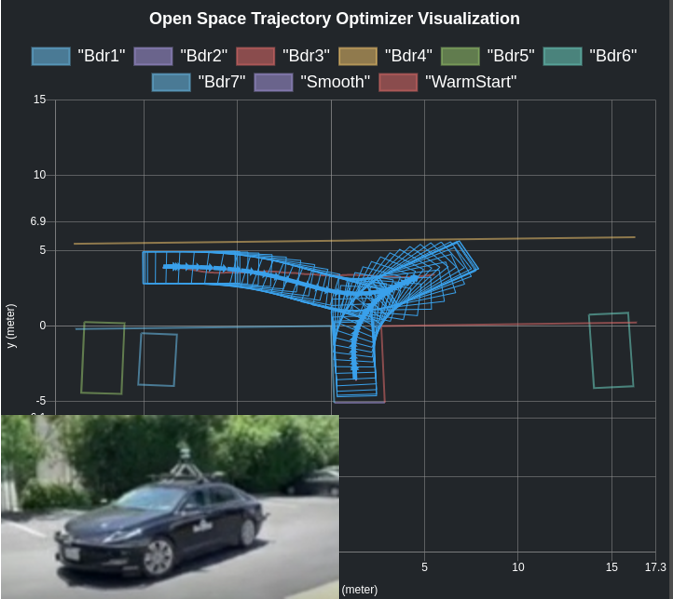}} 
 \end{minipage}
\caption{Real road test planning trajectories including scenarios: 1) reverse parking:~\ref{figure:Valet_Parking_Reverse_1},~\ref{figure:Valet_Parking_Reverse_3} and~\ref{figure:Valet_Parking_Reverse_Trajectory_Optimizer_Visualization_2}; 2) short reverse parking:~\ref{figure:Valet_Parking_Short_Reverse_1},~\ref{figure:Valet_Parking_Short_Reverse_2} and~\ref{figure:Valet_Parking_Short_Reverse_Trajectory_Optimizer_Visualization_2}; 3) zig-zag parking:~\ref{figure:Valet_Parking_Zigzag_1},~\ref{figure:Valet_Parking_Zigzag_4} and~\ref{figure:Valet_Parking_Zigzag_Trajectory_Optimizer_Visualization_2}.}
\label{figure:Valet_Parking}
\end{figure}

\begin{table}[h!]
\centering
\caption{Valet parking: trajectory end pose accuracy on real road tests}
\setlength{\tabcolsep}{0.5em}
\begin{tabular}{ p{3.7cm} | p{2.0cm} p{2.2cm} }
 \hline
 Parking Scenarios (Partitions) & Lateral Error(m)  & Heading Error(deg)\\ 
 \hhline{=:==}
 Reverse Parking (test run 1) & 0.0337 & 0.042\\
 \hline
 Reverse Parking (test run 2) & 0.0374 & 0.068 \\
 \hline
 Reverse Parking (test run 3) & 0.0446 & 0.098\\
 \hline
 Short Reverse Parking & 0.0645 & 0.137\\
 \hline
 Zigzag Parking (forward part)& 0.0331 & 1.271\\
 \hline
 Zigzag Parking (backward part)& 0.0476 & 0.037\\
 \hline
\end{tabular}
\label{table:valet_parking_experimental_data}
\end{table}

\section{Conclusion}
In this paper, we present TDR-OBCA, a robust, efficient and control friendly trajectory generation algorithm for autonomous driving in free space.
In TDR-OBCA, two new warm start methods and the optimization problem's reformulation dramatically decrease the simulation failure rate by $97\%$ and computation time by up to $44.82\%$, and increase driving comfort by reducing the steering control output by more than $13.53 \%$, thus making it more feasible as a real world application.. 

The results have been tested on various scenarios and hundreds of hours' road tests. In the future, we will focus on further improving computational efficiency and considering more driving comfort aspects to different scenarios~\cite{kilinc2012determination}~\cite{bae2020self}.

\bibliographystyle{IEEEtran}
\bibliography{IEEEabrv,./refs}

\end{document}